\newtheorem{thm}{\protect\theoremname}
\newtheorem{lemma}{Lemma}
\newcommand{\NN}{\mathbb{N}}
\newcommand{\RR}{\mathbb{R}}
\providecommand{\theoremname}{Theorem}
\title{\LARGE \bf Provably Safe and Deadlock-Free Execution of Multi-Robot Plans\\ under Delaying Disturbances}
\author{Michal Cap$^{*,1,2}$ and Jean Gregoire$^{*,1}$ and Emilio Frazzoli$^{1}$ %
\thanks{$^{*}$Equal contribution}%
\thanks{$^{1}$Michal Cap, Jean Gregoire and Emilio Frazzoli are affiliated with LIDS, Massachusetts Institute of Technology.}%
\thanks{$^{2}$Michal Cap is also affiliated with Dept. of Computer Science, FEL, CTU in Prague.~{\tt\small michal.cap@fel.cvut.cz}}%
}
\begin{document}
\maketitle
\thispagestyle{empty}
\pagestyle{empty}

\begin{abstract}
One of the standing challenges in multi-robot systems is the ability
to reliably coordinate  motions of multiple robots in environments
where the robots are subject to disturbances. We consider disturbances that force the robot to temporarily stop and delay its advancement along its planned trajectory which can be used to model, e.g., passing-by humans for whom the robots have to yield. Although reactive
collision-avoidance methods are often used in this context, they
may lead to deadlocks between robots. We design a multi-robot control strategy for
executing coordinated trajectories computed by a multi-robot trajectory planner and give a proof that the strategy is safe and deadlock-free
even when robots are subject to delaying disturbances. Our simulations show that the proposed strategy scales significantly better with the intensity of disturbances than the naive liveness-preserving approach. The empirical results further confirm that the proposed approach is more reliable and also more efficient than state-of-the-art reactive techniques.
\end{abstract}

\section{Introduction}

The advancements of robotics during the last decade lead to the emergence
of production-scale multi-robot systems consisting of a large number
of cooperating mobile robots. One of the prime examples is the warehouse
management systems developed by Kiva Systems (now Amazon Robotics)
that employs mobile robots to fetch products for human warehouse
workers. To simplify the problem, the robots in such systems typically
operate in a dedicated human-excluded area. To further improve impact
of robotics in manufacturing, the current research is focused on the development
of robots that can safely share floor with human workers. One of the
challenges posed by mixed human-robot systems is how to coordinate the motions
of the robots given that their motion can be disturbed by humans.

The problem of coordinating motions of multiple robots can be approached
either from a control engineering perspective by employing the \emph{reactive paradigm} to collision avoidance or from an AI perspective by employing the \emph{deliberative paradigm} that amounts to planning coordinated trajectories for the robots.

In the \emph{reactive paradigm}, the robot follows the shortest path to its
current destination and attempts to resolve collision situations as
they appear, locally. Each robot periodically observes positions and velocities
of other robots in its neighborhood. If there is a potential future
collision, the robot attempts to avert the collision by adjusting
its immediate heading and velocity. A number of methods have been
proposed~\cite{vanDenBerg2008RVO,Guy2009_ClearPath,AlonsoMora:kb}
that prescribe how to compute such collision-avoiding velocity in
a reciprocal multi-robot setting, with the most prominent one being
ORCA~\cite{vanBerg2011_ORCA}. These approaches are widely used in
practice thanks to their computational efficiency -- a collision-avoiding
velocity for a robot can be computed in a fraction of a millisecond~\cite{vanBerg2011_ORCA}.
However, these approaches resolve collisions only locally and thus
they cannot guarantee that the resulting motion will be deadlock-free
and that all robots will always eventually reach their destination. 

In the \emph{deliberative paradigm}, the system first searches for a set of globally
coordinated collision-free trajectories from the origin position to
the destination of each robot. After the planning has finished, the
robots start following their respective trajectories. If the robots 
execute the resulting joint plan precisely (or within some predefined
tolerance), it is guaranteed that they will reach their destination
while avoiding collisions with other robots. It is however known that
the problem of finding coordinated trajectories for a number of mobile
objects from the given start configurations to the given goal configurations
is intractable. More precisely, the coordination of disks amidst polygonal
obstacles is NP-hard~\cite{SpirakisY84_Strong_NP_Hardness_of_Moving_Many_Discs}
and the coordination of rectangles in a rectangular room is PSPACE-hard~\cite{hopcroft84}.
Even though the problem is relatively straightforward to formulate
as a planning problem in the Cartesian product of the configuration
spaces of the individual robots, the solutions can be very difficult
to find using standard heuristic search techniques as the joint state-space
grows exponentially with the number of robots. That is why heuristic techniques
such as prioritized planning~\cite{Erdmann87onmultiple,cap_2015_b}
are often used in practice. In fact, for multi-robot coordination
in so-called well-formed infrastructures, a solution is known to always
exist and it can be found in polynomial time using decoupled planning
techniques~\cite{cap_2015_a,cap_2015_b}.

The main drawback of the {deliberative paradigm} is that robots are
guaranteed to reach their goals only if they follow the planned trajectories
\emph{precisely} both in space and time. Although the spatial component
of the trajectory can be typically followed with reasonable tracking
error using existing path tracking techniques, precise tracking of
the temporal component is hard to achieve in environments that involve
humans due to the typical requirement that the robots must robots yield to people, who are however hard to predict.
\begin{figure}[h]
\centering
\includegraphics[width=6.5cm]{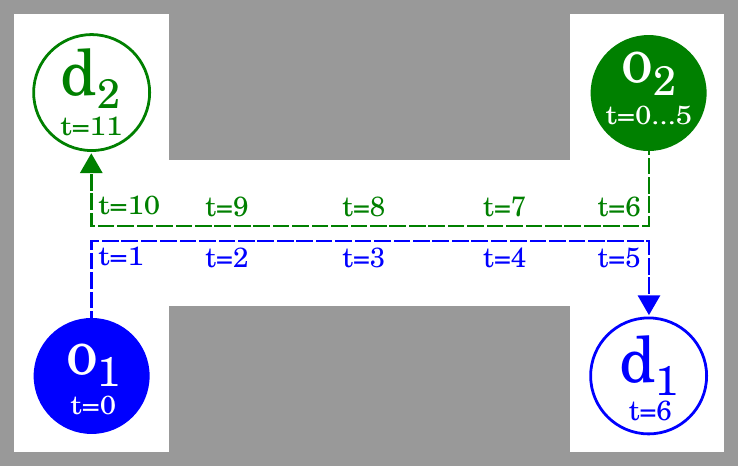} 
\caption{Example coordination problem. 
First robot desires to move from origin $o_1$ to destination $d_1$, the second robot desires to move from $o_2$ to $d_2$. A possible solution found by a multi-robot trajectory planner is indicated by the dashed line, where the planned position for each timepoint is  annotated as $t=\ldots $ shown next to the trajectory. The plan dictates that the robot~1 goes through the corridor first. Robot~2 waits at its origin until robot 1 leaves the corridor and then proceeds through the corridor towards its destination. Now suppose that robot~1 is subject to a disturbance and enters the corridor with the delay of 5 time units. If robot~2 does not reflect on the delay and continues moving according to the original plan, both robots engage in a heads-on collision or deadlock, depending on whether some low-level safety mechanism prevents the robots from physically crashing, in the center of the corridor.} 
\label{fig:example-coordination-problem}
\end{figure}
Observe that if one of the robots is delayed, and the others simply
continue advancing along their trajectory, the robots may end up in
a collision or deadlock (see Figure~\ref{fig:example-coordination-problem} for an example). 
One possible approach to deal with such a disturbance is to find new
coordinated trajectories from the current positions of all the robots
to their destinations. 
Although replanning may sometimes work, it is
usually too computationally intensive to be performed in a feedback loop
and there is no guarantee that the resulting instance will be solvable
in practical time. Another strategy, that we will refer to as ALLSTOP, is to stop all robots in the team every time any single robot is forced to stop. Although this strategy preserves liveness of the coordinated plan, it is clearly inefficient. 

In this work, we show that stopping the entire team when a disturbance stops a single robot is in most situations unnecessary  and provide a simple control rule specifying when a robot can proceed along its trajectory without risking future deadlock or collisions. Therefore, the proposed method can be used to generalize guarantees of existing multi-robot trajectory planning techniques to environments, where robots might be forced to temporarily stop. 
Our experiments suggests that in such environments, executing preplanned trajectories using the proposed control rule is more reliable and more efficient than coordinating robots using reactive collision-avoidance techniques.

The proposed technique may also act as an enabler allowing generalization of hybrid architectures, which proved a powerful paradigm in the context of intersection coordination~\cite{kowshik2011provable}, towards more general multi-robot coordination problems. 

The paper is structured as follows. Section~\ref{sec:problem} formulates the multi-robot plan execution problem under non-deterministic disturbances. Section~\ref{sec:control-scheme} reformulates the problem in the coordination space and designs a control strategy. Section~\ref{sec:analysis} proves that the proposed approach is safe and deadlock-free. Finally, Section~\ref{sec:evaluation} evaluates our control scheme through simulations and Section~\ref{sec:conclusion} concludes the paper.

\section{Problem Formulation}
\label{sec:problem}

Consider a 2-d environment $\mathcal{W}\subseteq\mathbb{R}^{2}$ populated
by $n$ identical disc-shaped holonomic robots indexed $1,\ldots,n$.
Their body has radius $r$ and they can travel at unit maximum speed.
Let $\pi_{1},\ldots,\pi_{n}$ be feasible collision-free trajectories
from the desired origin positions to the desired destination positions obtained from a multi-robot trajectory planner.
The discrete-time trajectory of robot $i$ is a function $\pi_{i}(t):\:\{0, \ldots , T\}\rightarrow\mathcal{W}$, where $T$ denotes the time step when the last robot reaches its destination.
The state of the system is described in terms of position $x_{i}\in\{0, \ldots , T\}$
of each robot $i$ along its trajectory $\pi_{i}$, i.e. if a robot
is in state $x_{i}$, the robot is at spatial position $\pi_{i}(x_{i})$.
The control variables are the advancement decisions for each robot,
where the decision whether robot $i$ should continue along its trajectory
or stop at timestep $t$ is denoted as $a_{i}(t)\in\{0,1\}$. The advancement of each
robot is subject to an exogenous multiplicative disturbance $\delta_{i}(t)\in\{0,1\}$,
where $\delta_{i}(t)=0$ models the situation when the robot is forced
to stop. Then, the discrete-time system dynamics is governed by the equation:
\[
\forall t\in\NN,~ x_{i}(t+1)=x_i(t)+a_{i}(t)\cdot\delta_{i}(t).
\]
The state $x_i(t)$ of robot $i$ can be intuitively interpreted as a position in the plan $\pi_i$ at time $t$. Note that this position is measured in time units. The model of robot dynamic we use is further illustrated in Figure~\ref{fig:model}.

\begin{figure}[ht]
\begin{center}
\includegraphics[width=0.7\linewidth]{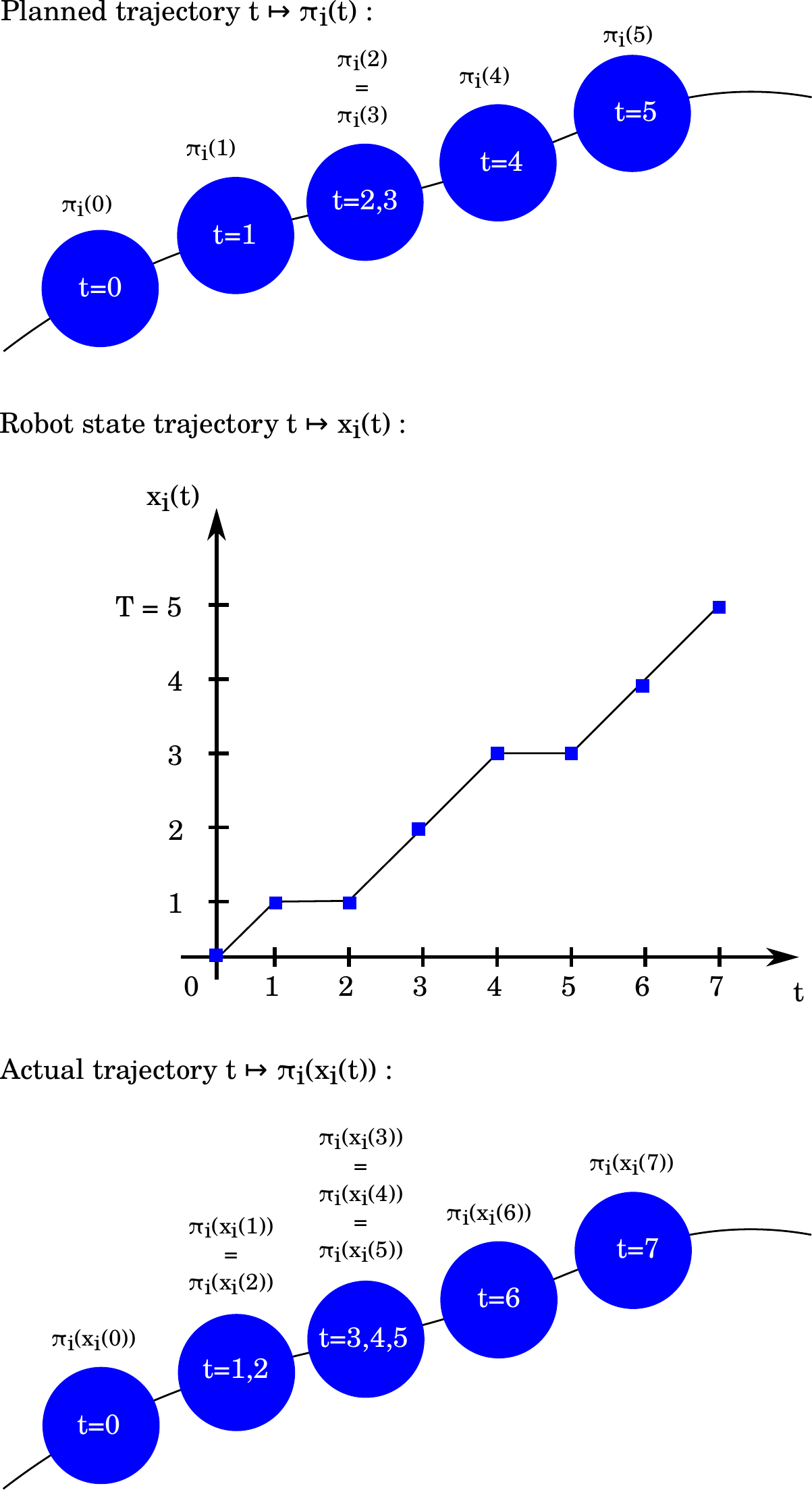}
\end{center}
\caption{\textbf{Top:} An example output of the planner for robot $i$. The balls indicate the time evolution of the planned trajectory $t \mapsto \pi_i(t)$ for robot $i$. The curve represents the geometric path in $\RR^2$ followed by robot $i$. 
\textbf{Middle:} State trajectory $t \mapsto x_i(t)\in[0,5]$ encodes the time evolution of position of robot $i$ along its planned trajectory. We can see that during time intervals $[1,2]$ and $[4,5]$ the robot makes no progress in its plan. Also, the fact that the state trajectory is below the diagonal can be interpreted as the robot lagging behind its plan. 
\textbf{Bottom:} The actual trajectory of the robot is $t \mapsto \pi_i(x_i(t))$. One can see that robot $i$ will follow the same geometric path in $\RR^2$ as planned. However, the time evolution along this geometric path is different. In particular, robot $i$ will reach its goal at time $7$ instead of $5$.
}
\label{fig:model} 
\end{figure}

Our objective is to design a multi-robot controller $G(x_{1},\ldots,x_{n})$
that takes the current position of each robot and returns the advancement
decision for the robots $(a_{1},\ldots,a_{n})$ such that from the
initial state $\mathbf{x}_{0}=(0,\ldots,0)$ at time $t=0$, the system
is 
\begin{itemize}
\item collision-free, i.e., the robots should never collide 
\[
\forall t\:\forall i,j\ i \neq j:\quad\left|\pi_{i}(x_{i}(t))-\pi_{j}(x_{j}(t))\right|>2r,
\]

\item deadlock-free, i.e. the robots should eventually reach their destination
\[
\forall i\:\exists t_{f}\:\forall t\geq t_{f}:\quad x_{i}(t)=T \text{, and}
\]

\item efficient, i.e. disturbance affecting one robot should not lead to
stopping the entire system.
\end{itemize}
In the next section we present a control scheme that satisfies the
properties stated above. We remark that the presented scheme combines
advantages of planning and reactive approach to multi-robot coordination
in that it guarantees liveness and in the same time retains freedom
of action allowing robots to deviate from the planned trajectory and
handle the disturbance.

\section{Control Scheme}
\label{sec:control-scheme}
In this section we will introduce a simple control rule,  which we will refer to as Robust Multi-Robot Trajectory Tracking Strategy (RMTRACK), that preserves safety and liveness of given multi-robot trajectory even under delay-inducing disturbances. To be able to concisely represent conditions about safe and unsafe mutual configurations of robots, we will make use of now standard coordination space formalism~\cite{ODonnell1989,LaValle06}. Our approach relies on transformation of the $n$-robot coordination problem in the physical space into the problem of finding a collision-free path in an $n$-dimensional abstract space called coordination space. 

\subsection{Coordination space}

A coordination space $X$ for $n$ robots is defined as an $n$-dimensional
cube $X=\{0, \ldots , T\}^{n}$. A point in a coordination space encodes the state of all robots, i.e. the position along their trajectories. Let $(x_{i},x_{j})$ be a point in a coordination space of two robots $i$ and $j$ with trajectories $\pi_{i}$
and $\pi_{j}$. Then, the point $(x_{i},x_{j})$ is said to be in
collision denoted as $c_{ij}(x_{i},x_{j})$ if $\left|\pi_{i}(x_{i})-\pi_{j}(x_{j})\right|<2r$.
The set of all states in coordination space of robots $i$ and $j$ representing collision between the two robots is denoted as $C_{ij}$ and defined as
\[
C_{ij}:=\left\{ (x_{i},x_{j})\ |\ c_{ij}(x_{i},x_{j})\right\}. 
\]
Analogically, the collision region $C$ in the coordination space of $n$ robots is defined as 
$$
\begin{array}{rl}
C&:=\left\{ (x_{1},\ldots,x_{n})\ |\ \exists i,j\ i\neq j:\ c_{ij}(x_{i},x_{j})\right\} \\
&\,=\left\{ (x{}_{1},\ldots,x_{n})\ |\ \exists i,j\neq i:\ (x_{i},x_{j})\in C_{ij}\right\}. \\
\end{array}
$$

By translating the original problem to the coordination space, our problem is now to design a
controller that ensure that the trajectory of the multi-robot system in the coordination space $t \in \{0,\ldots,T\} \mapsto (x_{1}(t),\ldots,x_{n}(t))$ starting from $(0,\cdots,0)$ will reach $(T,\ldots,T)$ in finite time $t_f$ and at all times remains in collision-free region $X \setminus C$ of the coordination space $X$.

Observe that in the absence of disturbances the trajectory of the system in the coordination space will be a "diagonal" line segment connecting points $(0,\ldots,0)$ and $(T,\ldots,T)$. Also, it is important to notice that the "diagonal" in the coordination space $t\in\{0,T\} \mapsto (t, \ldots , t)\in\{0,T\}^{n}$ is necessarily collision-free with respect to $C$ since the planned trajectories ($\pi_{1},\ldots,\pi_{n}$) are assumed to be collision-free. Now, the problem is to design a
control law that ensures that collision region $C$ is avoided even in the presence of disturbances.

\subsection{Control Law} \label{sec:control-law}
We now introduce RMTRACK control law that ensures that the actual trajectory in coordination space under disturbances avoids the collision region $C$, while simultaneously always making progress towards point $(T,\ldots,T)$. The multi-robot control law $G(x_1,\ldots,x_n)$ is decomposed into collection of control laws  $\{ G_i(x_1,\ldots,x_n) \}$, each governing the advancement of robot $i$. The control law for single robot $i$ is defined as follows:
\begin{equation}
\begin{array}{l}
G_{i}(x_{1}, \ldots , x_{n}):= \\
\quad\left\{
\begin{array}{ll}
0\quad\text{ if } x_{i}=T\text{ or if \ensuremath{\big[}}\exists j:\ x_{i}>x_{j}\;\text{ and }\\
\quad \quad \quad C_{ij}\cap(\{x_{i}+1\}\times\{x_{j}, \ldots , x_{i}+1\})\neq\emptyset\big]\\
1\quad\text{ otherwise.}
\end{array}\right.
\end{array}
\label{eq:control-law}
\end{equation}
As we can see, the control law allows robot $i$ to proceed only if the line segment from $(x_i+1,x_j)$ to $(x_i+1,x_i+1)$ is collision-free in the coordination space $X_{ij}$ for every other robot $j$. 
The mechanism is illustrated in Figure~\ref{fig:illustration-control-law-condition-check}.  
The effect of application of such a control law is shown in example in Figure~\ref{fig:illustration-control-law}. 

\begin{figure}[ht]
\begin{centering}
\includegraphics[scale=0.36]{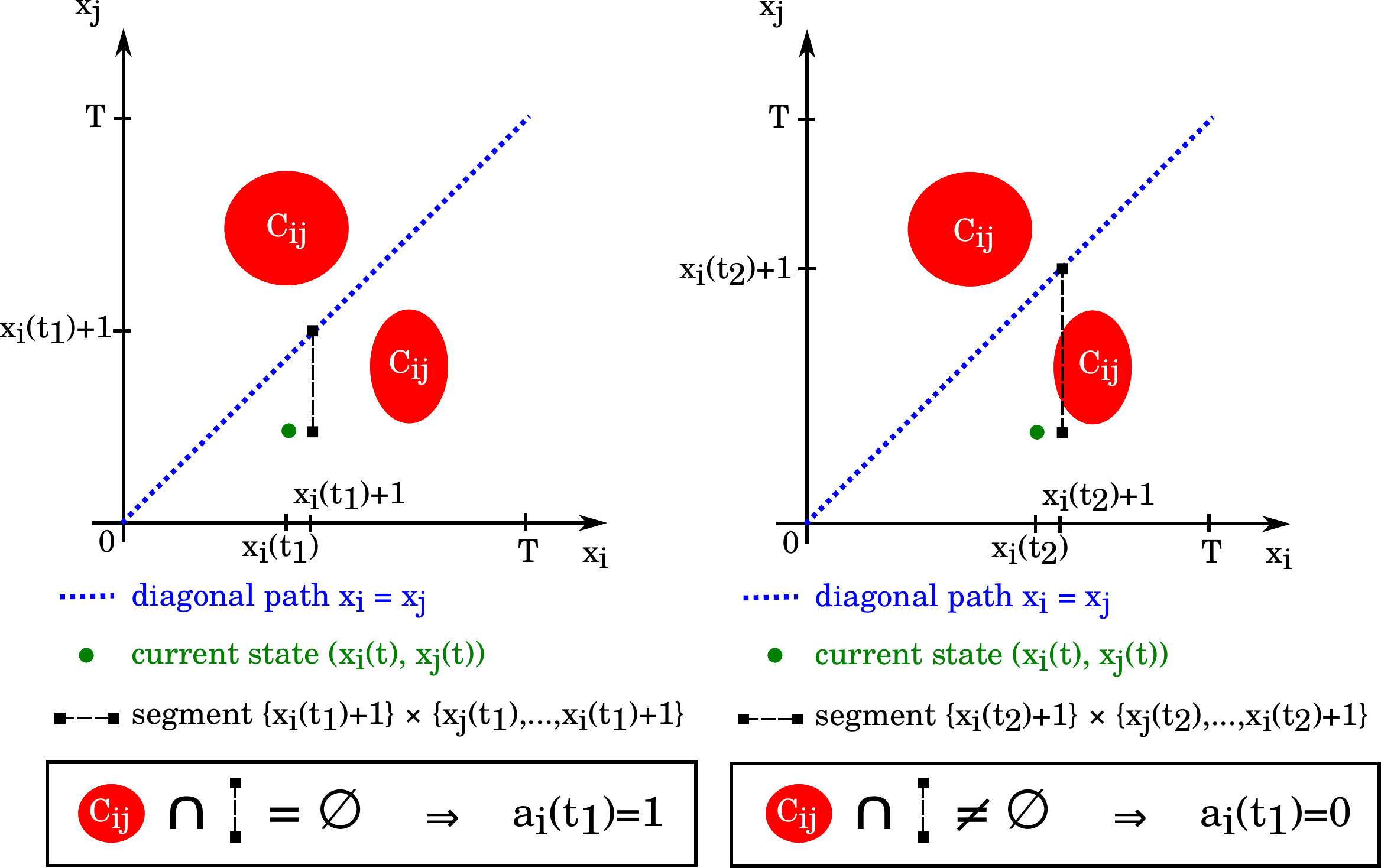}
\end{centering}
\caption{Illustration of the control law computation for robot $i$ with respect to single other robot $j$. \textbf{Left:} The segment $\{x_{i}+1\}\times\{x_{j}, \ldots , x_{i}+1\}$ is  collision-free with $C_{ij}$ at time $t_1$, therefore the robot $i$ is commanded to proceed, i.e. $a_i(t_1)=1$. \textbf{Right:} The segment is not collision-free with $C_{ij}$ at time $t_2$, therefore the robot $i$ is commanded to stop, i.e. $a_i(t_2)=0$.}
\label{fig:illustration-control-law-condition-check} 
\end{figure}

\begin{figure}[ht]
\begin{center}
\includegraphics[scale=0.45]{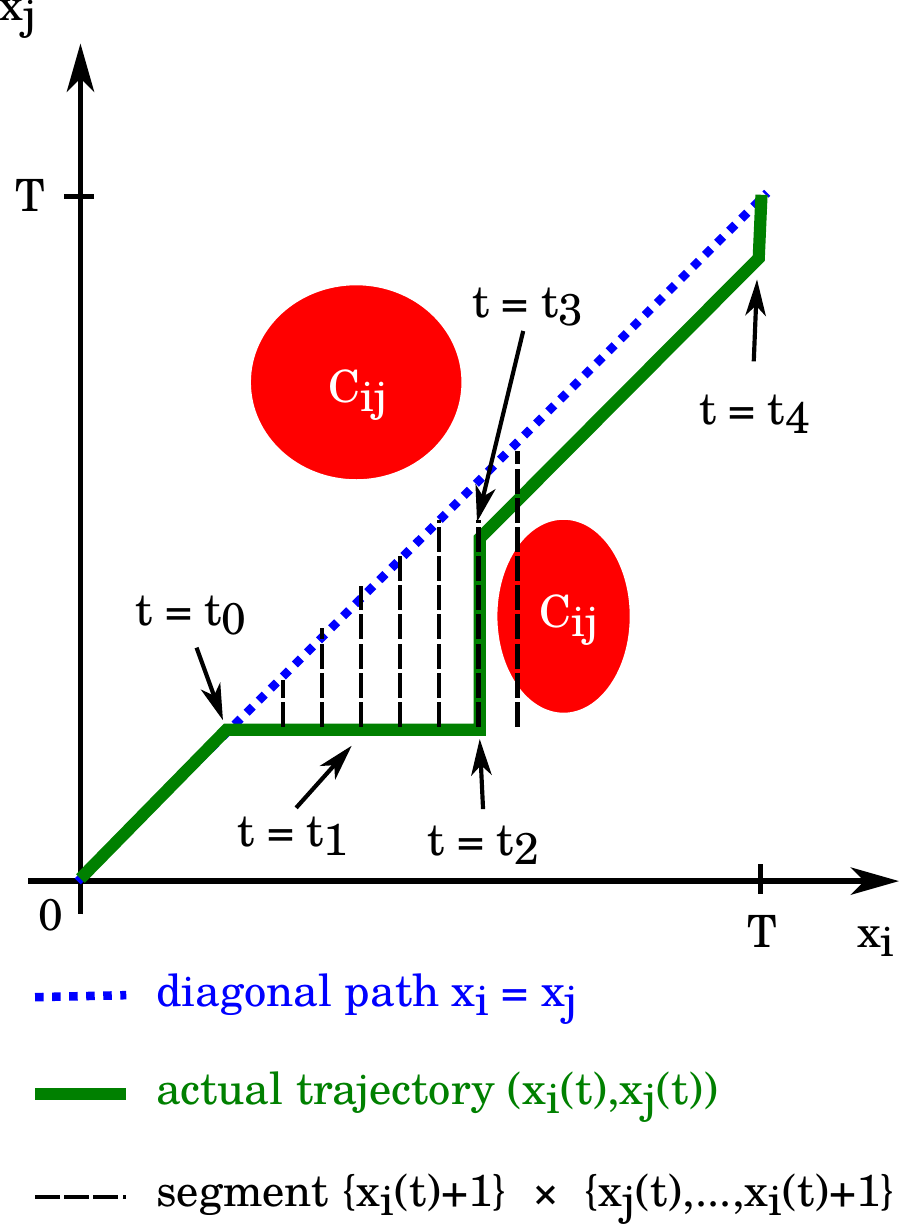}
\end{center}
\caption{Trajectory of the robots in the coordination space under control law $G$. At time $t=t_0$, a disturbance makes robot $j$ stop ($\delta_j(t_0)=0$). The system leaves the diagonal path, and the control law allows robot $i$ to proceed until time $t=t_2$ where robot $i$ is commanded to stop, i.e., $a_i(t_2)=0$ (see Figure~\ref{fig:illustration-control-law-condition-check}). Finally, disturbance for robot $j$ goes away, and robot $j$ proceeds. It's only at time $t=t_3$ that the control law allows robot $i$ to proceed.  Robot $i$ reaches its goal at time $t=t_4$ and robot $j$ right after. No collision occurred and note how the control law ensured that the actual trajectory in the coordination space remains on the same side as the diagonal path with respect to each connected component of the obstacle region. This is what enables to guarantee liveness, as changing the homotopy class  may lead to a deadlock configuration with respect to other robots.} 

\label{fig:illustration-control-law} 
\end{figure}

The control law allows robots to deviate from the planned trajectory in the coordination space, 
but ensures that the actual trajectory the robots follow is homotopic to the planned trajectory, 
i.e. to the "diagonal" path. 
This is ensured by requiring that the trajectory in coordination space of any two different robots $i$ and $j$ stay at the same side of each connected component of $C_{ij}$ as planned. Translated back to the original multi-robot formulation in $\mathbb{R}^2$ workspace, the above controller ensures that the robots will traverse overlapping parts of their geometric paths in the same order as implicitly specified in the planned trajectories.
We note that the usefulness of navigation under homotopic constraints has been previously noticed both in the context of single robot control~\cite{brock2002elastic} and multi-robot coordination~\cite{gregoire2013robust}.

\section{Theoretical Analysis}
\label{sec:analysis}
In this section we show that under certain technical assumptions, the
RMTRACK control law satisfies collision-freeness and liveness properties, i.e. the robots are guaranteed not to collide and eventually reach their goal positions.

\subsection{Collision-freeness}

In order to show collision-freeness we make the following technical assumption. We assume that there is a $1$-margin between the diagonal path and the obstacle region in the coordination space, i.e.,
\begin{equation}
\forall t\in\{0, \ldots ,T-1\},~\forall i\neq j,~\left\{
\begin{array}{ll}
(t+1,t)\notin C_{ij}\\
\text{and}\\
(t,t+1)\notin C_{ij}
\end{array}
\right.\label{eq:1margin}
\end{equation}

This assumption is as typically negligible as the geometric distance traveled by a robot in one time step is small. However, it can be satisfied by planning the reference trajectories with robots with slightly larger bodies. 

\begin{lemma}
Under control law $G$ and assuming that Equation~\ref{eq:1margin} holds, we have for all $t\in\NN$ and for all $i,j\in\{1, \ldots ,n\}$
s.t. $x_{i}(t)\geq x_{j}(t)$ : 
\begin{equation}
C_{ij} \cap (\{x_{i}(t)\} \times \{x_{j}(t) , \ldots , x_{i}(t)\}) = \emptyset\eqname{$E_{i,j,t}$}
\end{equation}
\label{lemma:collision-free}
\end{lemma}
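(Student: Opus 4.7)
I would prove the lemma by induction on the time index $t$, where the hypothesis at each $t$ is the entire statement, i.e., $E_{i,j,t}$ for every ordered pair $(i,j)$ with $x_i(t) \geq x_j(t)$. The base case $t = 0$ is immediate: all positions are zero, so the segment collapses to the single diagonal point $(0,0)$, which lies outside $C_{ij}$ because the reference trajectories $\pi_1,\ldots,\pi_n$ are collision-free by assumption.

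For the inductive step, fix a pair $(i,j)$ with $x_i(t+1) \geq x_j(t+1)$ and let $\Delta_k := x_k(t+1) - x_k(t) \in \{0,1\}$. If already $x_i(t) \geq x_j(t)$, I would split on $(\Delta_i, \Delta_j)$. When $\Delta_i = 0$, the new segment either equals (if $\Delta_j = 0$) or is a subset (if $\Delta_j = 1$, which forces $x_i(t) > x_j(t)$) of the one guaranteed by $E_{i,j,t}$. When $\Delta_i = 1$ and $x_i(t) > x_j(t)$, robot $i$ was allowed to advance, so by the control law~(\ref{eq:control-law}) applied to $G_i$ with witness $j$ we obtain $C_{ij} \cap (\{x_i(t)+1\} \times \{x_j(t), \ldots, x_i(t)+1\}) = \emptyset$; the new segment, possibly shortened when $\Delta_j = 1$, sits inside this set. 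The remaining configuration in which $x_i(t+1) \geq x_j(t+1)$ but $x_i(t) < x_j(t)$ forces $\Delta_i = 1$, $\Delta_j = 0$, and $x_j(t) = x_i(t)+1$, so the new segment collapses to the diagonal singleton $\{(x_i(t)+1, x_i(t)+1)\}$, which is collision-free since the diagonal is.

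The delicate configuration is the \emph{tie} $x_i(t) = x_j(t)$ with $\Delta_i = 1$: the control law for $G_i$ imposes no condition involving $j$ when $x_i(t) \not> x_j(t)$, so the argument above is unavailable. This is exactly where the $1$-margin hypothesis~(\ref{eq:1margin}) enters. If $\Delta_j = 0$, the new segment is $\{x_i(t)+1\} \times \{x_i(t), x_i(t)+1\}$, whose upper endpoint is on the diagonal and whose lower endpoint $(x_i(t)+1, x_i(t))$ is excluded from $C_{ij}$ by~(\ref{eq:1margin}) at parameter $x_i(t)$; if $\Delta_j = 1$, the new segment is just the diagonal point $(x_i(t)+1, x_i(t)+1)$. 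The symmetric tie case $(\Delta_i, \Delta_j) = (0,1)$ with $x_i(t) = x_j(t)$ inverts the ordering at $t+1$, so the instance to verify is $E_{j,i,t+1}$, which follows by the mirrored argument using the second half of~(\ref{eq:1margin}). I expect the main difficulty of the proof to lie precisely in this bookkeeping around ties and ordering flips: the control law is deliberately asymmetric there, so each combination of ``who is ahead'' and ``who advances'' has to be matched to the right invocation of either~(\ref{eq:control-law}) or~(\ref{eq:1margin}), but no nontrivial geometric argument is required beyond these two facts.
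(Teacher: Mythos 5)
Your proof is correct and follows essentially the same route as the paper's: induction on $t$, with the control law~(\ref{eq:control-law}) supplying the collision-free segment when the advancing robot is strictly ahead, and the $1$-margin assumption~(\ref{eq:1margin}) covering the tie case where the control law imposes no constraint. The only cosmetic difference is that you organize the cases by the ordering at $t+1$ (which makes you handle the order-flip configuration explicitly, collapsing to a diagonal singleton), whereas the paper organizes them by the ordering at $t$; the substance is identical.
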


\begin{proof}
Initially, we have $x_{1}(0)=x_{2}(0)= \ldots =x_{n}(0)=0$ and the
state of robots does not belong to $C_{ij}$, so that ($E_{i,j,0}$)
holds for all $i,j\in\{1, \ldots , n\}$.

Now, assume that ($E_{i,j,t}$) holds at some arbitrary time step $t\in\NN$ for all $i,j\in\{1, \ldots , n\}$ s.t. $x_{i}(t)\geq x_{j}(t)$. 

For each $i,j\in\{1, \ldots , n\}$, consider two options:
\begin{itemize}
\item If $x_i(t)=x_j(t)$, then $(E_{i,j,t})\equiv (E_{j,i,t})$ hold and consider three options:
\begin{itemize}
\item If $x_i(t+1)=x_j(t+1)$, $(E_{i,j,t+1})\equiv (E_{j,i,t+1})$ will be satisfied as planned trajectories are collision-free.
\item If robot $i$ moves one step forward while the other one $j$ is stopped, then by Equation~\ref{eq:1margin}, we have $(x_i(t+1),x_j(t+1))=(x_i(t)+1,x_i(t))\notin C_{ij}$, and as planned trajectories are collision-free $(x_i(t+1),x_i(t+1))\notin C_{ij}$, so that $\{x_i(t+1)\}\times\{x_j(t+1),x_i(t+1)\} \cap C_{ij}\neq 0$, so that ($E_{i,j,t+1}$) is satisfied with $x_i(t+1)>x_j(t+1)$.
\item If robot $i$ moves one step forward while the other one $j$ is stopped,, we use the symmetric reasoning to obtain that ($E_{j,i,t+1}$) holds.
\end{itemize}
\item If $x_i(t)>x_j(t)$ then we have $x_i(t+1)\geq x_j(t+1)$ and consider three options:
\begin{itemize}
\item If neither of robots moves, ($E_{i,j,t+1}$) will still be obviously satisfied as $(E_{i,j,t+1})\equiv (E_{i,j,t})$ which holds.
\item If robot $i$ does not move, then we have:
\begin{eqnarray*}
\{x_{i}(t+1)\}\times\{x_{j}(t+1), \ldots , x_{i}(t+1)\}\\
=\{x_{i}(t)\}\times\{x_{j}(t+1), \ldots , x_{i}(t)\}\\
\subseteq\{x_{i}(t)\}\times\{x_{j}(t), \ldots , x_{i}(t)\}
\end{eqnarray*}
which does not intersect $C_{ij}$ as ($E_{i,j,t}$) holds, so that ($E_{i,j,t+1}$) is satisfied.
\item If robot $i$ moves, then by construction of $G$ and because $x_i(t)>x_j(t)$, we have:
\begin{eqnarray}
C_{ij}\cap(\{x_{i}(t)+1\}\times\{x_{j}(t), \ldots , x_{i}(t)+1\}) = \emptyset \label{eq:lemma-proof}
\end{eqnarray}
Taking into account that $x_i(t+1)=x_i(t)+1$ and $x_j(t+1)\in\{x_j(t),x_j(t)+1\}$, we obtain:
\begin{eqnarray*}
\{x_{i}(t+1)\}\times\{x_{j}(t+1), \ldots , x_{i}(t+1)\}\\
= \{x_{i}(t)+1\}\times\{x_{j}(t+1), \ldots , x_{i}(t)+1\}\\
\subseteq \{x_{i}(t)+1\}\times\{x_{j}(t), \ldots , x_{i}(t)+1\}
\end{eqnarray*}
which does not intersect $C_{ij}$ by Equation~\ref{eq:lemma-proof}, so that ($E_{i,j,t+1}$) holds.
\end{itemize}
\end{itemize}
By induction, we conclude that ($E_{i,j,t}$) is satisfied for all $t\in\NN$ and $i,j\in\{1, \ldots , n\}$ s.t. $x_{i}(t)\geq x_{j}(t)$.
\end{proof}
\begin{thm}
Under control law $G$ and and assuming that Equation~\ref{eq:1margin} holds, the trajectory in the coordination space is
collision-free, i.e. 
$$
\forall t\in\NN,~(x_{1}(t), \ldots , x_{n}(t))\notin C.
$$
\end{thm}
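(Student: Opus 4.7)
The plan is to derive the theorem as a direct corollary of Lemma~\ref{lemma:collision-free}. Recall that the collision set $C$ is defined as the union, over all pairs $i\neq j$, of states whose $(i,j)$-projection lies in $C_{ij}$. Therefore proving $(x_1(t),\ldots,x_n(t))\notin C$ reduces to proving the pairwise statement $(x_i(t),x_j(t))\notin C_{ij}$ for every ordered pair $i\neq j$ and every $t\in\NN$.

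Next I would fix an arbitrary $t$ and a pair $i\neq j$, and observe that by trichotomy on the integers either $x_i(t)\geq x_j(t)$ or $x_j(t)\geq x_i(t)$. In the first case I invoke Lemma~\ref{lemma:collision-free} applied to the pair $(i,j)$: it asserts that the vertical segment $\{x_i(t)\}\times\{x_j(t),\ldots,x_i(t)\}$ has empty intersection with $C_{ij}$. Since $(x_i(t),x_j(t))$ is literally one of the lattice points of this segment (it is the lower endpoint), we immediately get $(x_i(t),x_j(t))\notin C_{ij}$. In the second case I apply the same lemma with the roles of $i$ and $j$ swapped to conclude $(x_j(t),x_i(t))\notin C_{ji}$, which is equivalent to $(x_i(t),x_j(t))\notin C_{ij}$ by the symmetry of the collision predicate $c_{ij}\equiv c_{ji}$.

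Having established the pairwise non-collision for every $i\neq j$, it suffices to unfold the definition of $C$: since no pair $(x_i(t),x_j(t))$ lies in $C_{ij}$, the full state $(x_1(t),\ldots,x_n(t))$ fails the existential condition defining $C$, and hence lies in $X\setminus C$. Quantifying over $t\in\NN$ yields the theorem.

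There is essentially no obstacle here; the theorem is a bookkeeping lift from the pairwise invariant proved in the lemma to the joint state. The only point that deserves a brief mention in the write-up is why the pairwise case with $x_i(t)\geq x_j(t)$ already covers $(x_i(t),x_j(t))$ as an \emph{element} of the segment guaranteed collision-free by the lemma, and the symmetric observation that $C_{ij}=\{(a,b):c_{ij}(a,b)\}$ inherits the symmetry of $|\pi_i(\cdot)-\pi_j(\cdot)|$ so the two cases genuinely exhaust all pairs.
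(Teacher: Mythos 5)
Your proposal is correct and is essentially the paper's own argument: the paper proves the theorem by assuming a collision, extracting a pair $i,j$ with $(x_i(t),x_j(t))\in C_{ij}$ and $x_i(t)\geq x_j(t)$ without loss of generality, and contradicting Lemma~\ref{lemma:collision-free}, which is the same reduction you perform directly. The only cosmetic difference is that you argue contrapositively and spell out the symmetry of $C_{ij}$ that justifies the without-loss-of-generality step.
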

\begin{proof}
Take an arbitrary time step $t\in\NN$. Assume that $(x_{1}(t), \ldots , x_{n}(t))\in C$. Then, there exists $i,j$ such that $(x_i(t),x_j(t))\in C_{ij}$ and we can assume without loss of generality that $x_i(t)\geq x_j(t)$. This is in contradiction with Lemma~\ref{lemma:collision-free}.
\end{proof}

\subsection{Liveness}

First, we characterize our assumptions on disturbances. Clearly,
it is possible to construct a disturbance function that will prevent
the system from reaching configuration $(T, \ldots , T)$ under any control
law. For example, if for a given robot $i\in\{1, \ldots , n\}$, we have
$\forall t\in\NN,\:\delta_{i}(t)=0$, then it is impossible for robot
$i$ to reach its goal. Therefore, in the following analysis, we assume
that disturbances do not prohibit any of the robots from reaching
its goal, i.e. we consider systems in which disturbances may delay
any given robot for arbitrarily long, but the robot will eventually
be able to reach the goal. Formally, we say that disturbances are
\emph{non-prohibitive} if for any controller that satisfies
$$
\forall t\in\NN,\left\{ \begin{array}{ll}
x_{1}(t)=x_{2}(t)= \ldots =x_{n}(t)=T\\
\text{or}\\
\exists i\in\{1, \ldots , n\}:~x_{i}(t)<T\text{ and }a_{i}(t)=1
\end{array}\right.\text{,}
$$
the system will eventually reach the goal, i.e there exists $t_{f}\in\NN$
such that $(x_{1}(t_{f}), \ldots , x_{n}(t_{f}))=(T, \ldots , T)$. In other
words, as long as the controller lets at least one unfinished robot proceed at
any point of time, all robots will eventually reach their goal. The
time of goal achievement $t_{f}$ might be affected by disturbances,
but disturbances will not prevent goal achievement in finite time.

Under \emph{non-prohibitive} disturbances, the control law $G$
guarantees liveness:

\begin{lemma}
Under control law $G$, there is at least one robot proceeding at
any point of time, i.e., 
$$
\forall t\in\NN,~\left\{ \begin{array}{ll}
x_{1}(t)=x_{2}(t)= \ldots =x_{n}(t)=T\\
\text{or}\\
\exists i\in\{1, \ldots , n\}:~x_{i}(t)<T\text{ and }a_{i}(t)=1
\end{array}\right.
$$
\end{lemma}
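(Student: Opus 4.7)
The plan is to exploit the asymmetry built into the control law $G_i$: a robot is ever told to stop by the second clause only if there exists some other robot $j$ with $x_i > x_j$. A robot whose coordination-space position is minimal therefore cannot be stopped by this clause, so it either has already reached its goal or is commanded to proceed.

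Concretely, I would fix an arbitrary time step $t\in\NN$ and pick $k\in\arg\min_{i\in\{1,\ldots,n\}} x_i(t)$ (ties broken arbitrarily). I would then split into two cases. In the first case $x_k(t)=T$; combined with $x_k(t)\leq x_i(t)\leq T$ for every $i$, this forces $x_1(t)=\cdots=x_n(t)=T$, which is the first branch of the disjunction in the statement.

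In the second case $x_k(t)<T$. By minimality of $x_k(t)$, there is no index $j$ with $x_k(t) > x_j(t)$, so the entire bracketed condition in the definition (\ref{eq:control-law}) of $G_k$ is vacuously false. Since also $x_k(t)\neq T$, neither disjunct of the "otherwise yields $0$" clause is active, and therefore $G_k(x_1(t),\ldots,x_n(t))=1$, i.e.\ $a_k(t)=1$. This robot $k$ then witnesses the second branch of the disjunction, completing the proof.

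I do not expect any real obstacle: the argument is essentially a one-line observation about the structure of $G$, and the only subtle point worth flagging in the write-up is that the stop clause uses the strict inequality $x_i > x_j$, which is exactly what ensures that at least one minimizer is always free to advance. This is the structural reason why the controller cannot deadlock, in contrast to more symmetric stop-rules.
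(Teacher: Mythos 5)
Your proof is correct and is essentially the paper's own argument: the paper likewise considers the set of minimizers $I(t)=\arg\min_i x_i(t)$, observes that the stop clause of $G$ requires some $j$ with $x_i>x_j$ and hence cannot fire for a minimizer, and splits into the same two cases ($\min_i x_i(t)=T$ forces all positions to equal $T$; otherwise a minimizer with $x_k(t)<T$ has $a_k(t)=1$). No gaps.
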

\begin{proof}
Define $I(t)\subseteq \{1, \ldots , n\}$ as follows: 
$$
I(t):=\arg\min_{i} \; x_{i}(t)= \bigl\{ i: \forall  j \neq i \quad x_{j}(t) \geq x_{i}(t)  \bigr\} % \{ i:~\forall  j \neq i \x_{j}(t)\geq x_{i}(t) \}
$$
$I(t)$ exists and is a non-empty set as $\{1, \ldots , n\}$ is finite.
By construction of $G$, we have: 
$$
\forall i\in I(t) \quad a_{i}(t)=G_{i}(x_{1}(t), \ldots , x_{n}(t))=1\text{ or }x_{i}(t)=T
$$
There are two scenarios: 
a) Either for all $i\in I(t)$, $x_{i}(t)=T$, then $\min_{i}x_{i}(t)=T$,
so that we have $x_{1}(t)=x_{2}(t)= \ldots =x_{n}(t)=T$. 
b) Or there exists some $i\in I(t)$ such that $x_{i}(t)<T$ and $a_{i}(t)=G_{i}(x_{1}(t) x_{n}(t))=1$. 
This concludes the proof. \end{proof}
\begin{thm}
Control law $G$ ensures liveness under non-prohibitive disturbances, i.e.
$$
\exists t_{f}:x_{1}(t)=x_{2}(t)=\ldots=x_{n}(t)=T
$$
\end{thm}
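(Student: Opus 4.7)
The plan is to obtain this theorem almost immediately from the preceding lemma combined with the definition of non-prohibitive disturbances; no substantial additional argument is required. The preceding lemma establishes precisely the progress condition appearing as the hypothesis in the non-prohibitiveness definition: under $G$, for every $t\in\NN$, either all robots are already at $T$ or there exists an unfinished robot $i$ whose commanded advancement $a_i(t)$ equals $1$. First I would invoke that lemma to certify that $G$ is a controller of the kind to which the non-prohibitive disturbance assumption applies. Then, directly from that assumption, I would conclude the existence of a finite $t_f\in\NN$ such that $(x_1(t_f),\ldots,x_n(t_f))=(T,\ldots,T)$, which is the desired conclusion.

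If the stronger ``for all $t\geq t_f$'' reading (as stated informally in the problem formulation of Section~\ref{sec:problem}) is the one intended rather than the literal existence statement, I would append a one-line invariance remark: whenever $x_i=T$, the definition of $G_i$ forces $a_i=0$, so the dynamics $x_i(t+1)=x_i(t)+a_i(t)\delta_i(t)$ pin $x_i$ at $T$ for every subsequent step, making the goal configuration absorbing. Because all substantive work has already been carried out in the preceding lemma, there is essentially no obstacle in this proof; the only mechanism worth flagging is that $G_i$ blocks robot $i$ only when \emph{some other} robot $j$ has strictly smaller coordinate, so the robot(s) currently at the team minimum can never be blocked unless already at $T$, guaranteeing that the system makes progress at every time step and that non-prohibitive disturbances suffice for termination.
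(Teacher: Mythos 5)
Your proposal is correct and matches the paper's own argument exactly: the theorem is obtained by noting that the preceding lemma certifies that $G$ satisfies the progress condition in the definition of non-prohibitive disturbances, from which the existence of $t_f$ follows immediately. Your additional remark that the goal configuration is absorbing (since $G_i$ forces $a_i=0$ once $x_i=T$) is a harmless and reasonable clarification beyond what the paper states.
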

\begin{proof}
This is a direct consequence of the preceding lemma and of the assumption
made on disturbances.
\end{proof}

\section{Experimental Evaluation}
\label{sec:evaluation}

\begin{figure*}[t]
\begin{center}
    \begin{tabular}{>{\centering\arraybackslash} m{6cm} >{\centering\arraybackslash} m{6cm} >{\centering\arraybackslash} m{6cm}}
    Empty hall with 25 robots & Office corridor with 25 robots & Warehouse with 30 robots  \\  
    \includegraphics[width=6cm]{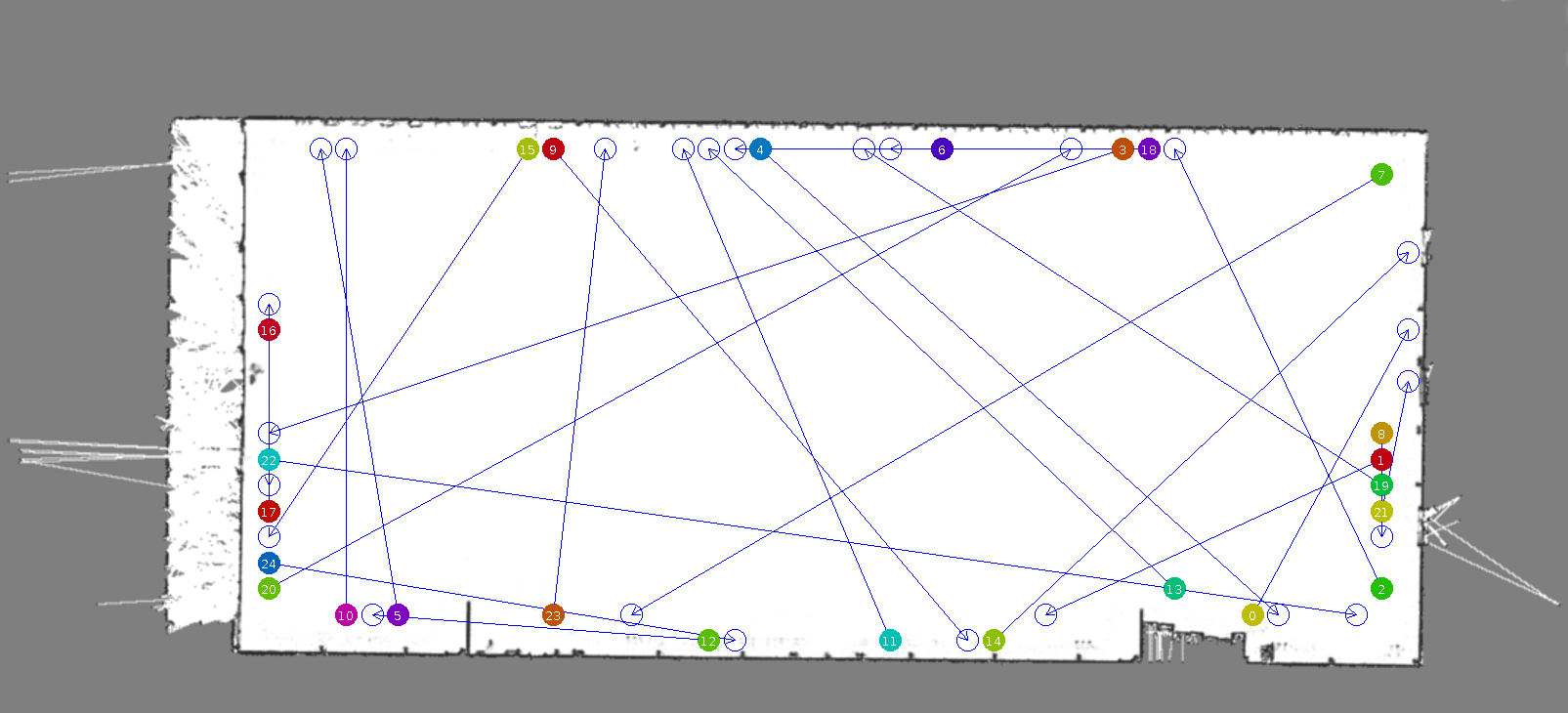} & 
    \includegraphics[width=6cm]{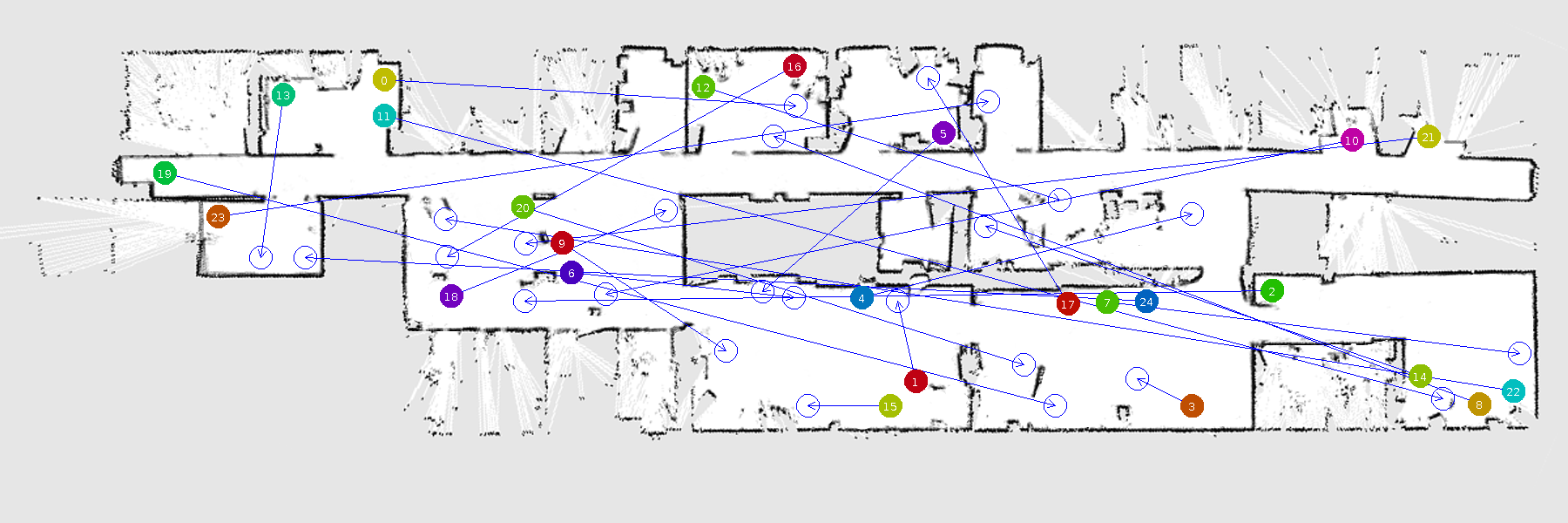} & 
    \includegraphics[width=4cm]{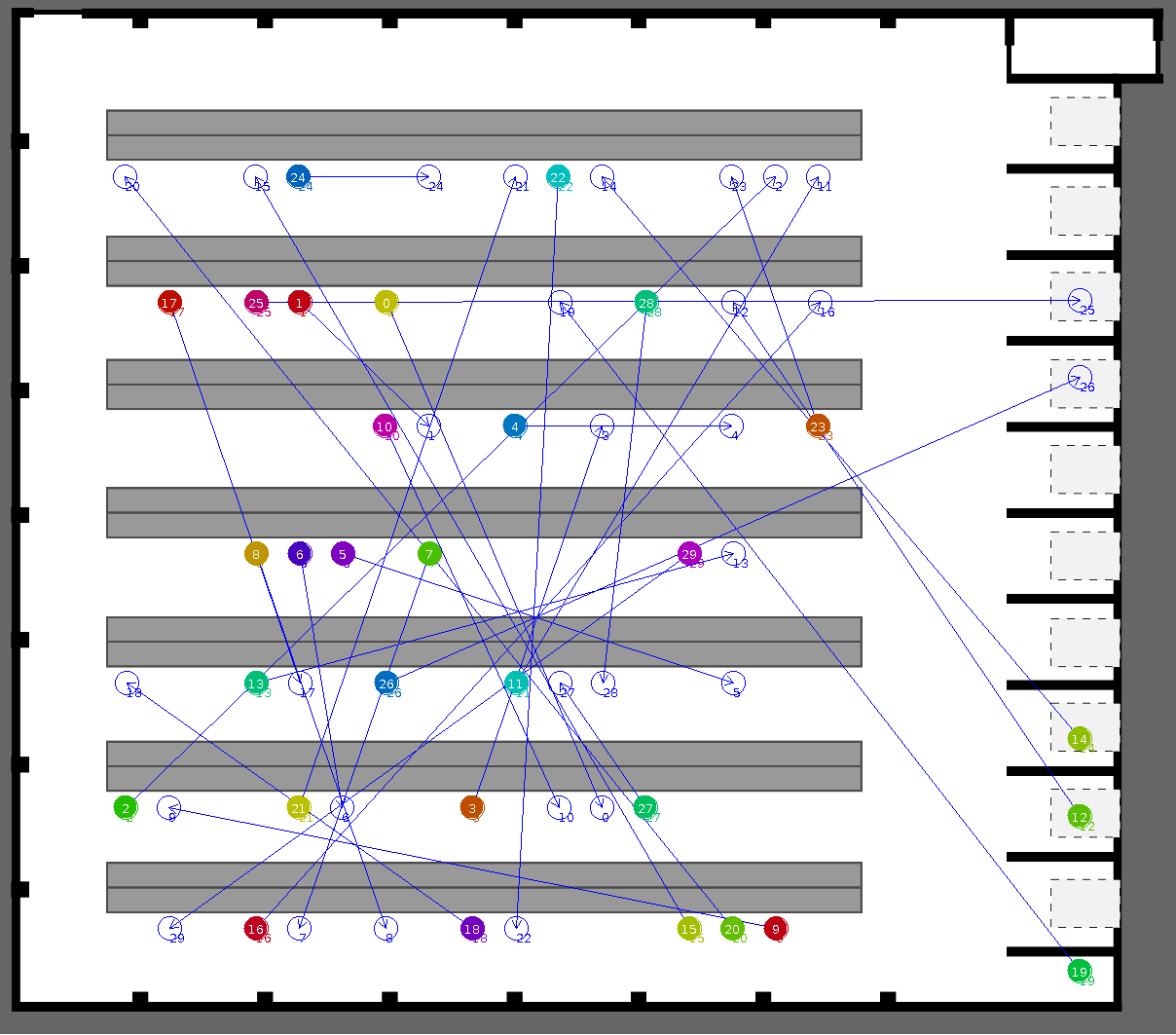} \\  
    \end{tabular} 
  \end{center}
\caption{Maps used for experimental comparison. The figures show an example problem instance in each environment. The filled circles represent robots. The arrows indicate the desired destination of each robot.} \label{fig:environments}
\end{figure*}

\begin{figure*}[t]
\begin{center}
    \begin{tabular}{c c c}
    Empty hall & Office corridor & Warehouse \\
    \includegraphics[width=2.7cm]{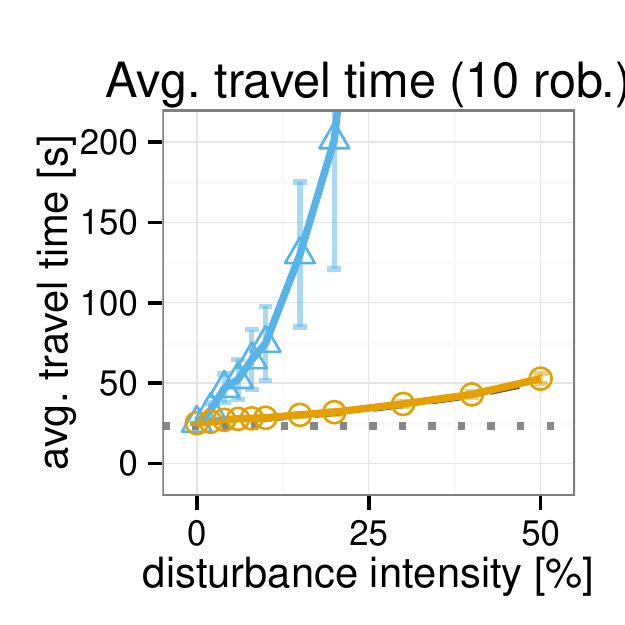}
    \includegraphics[width=2.7cm]{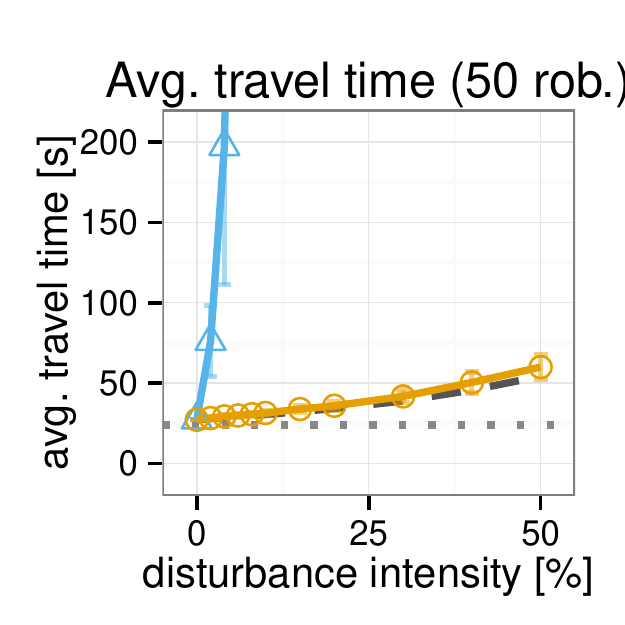} 
     & 
    \includegraphics[width=2.7cm]{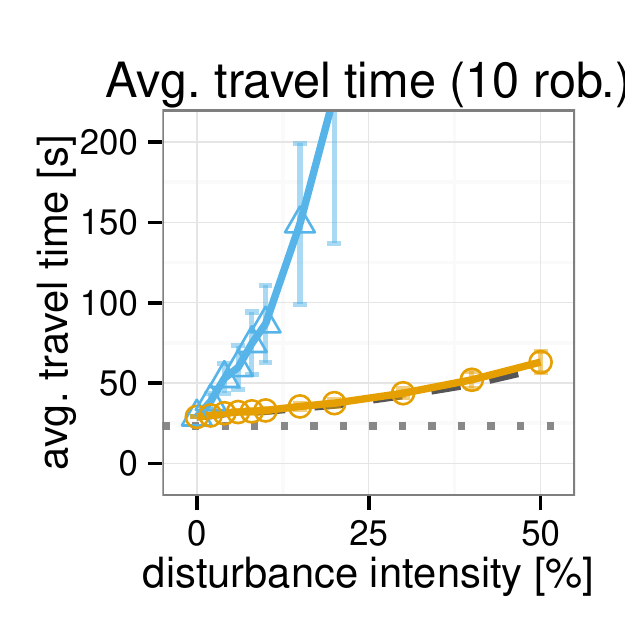}
    \includegraphics[width=2.7cm]{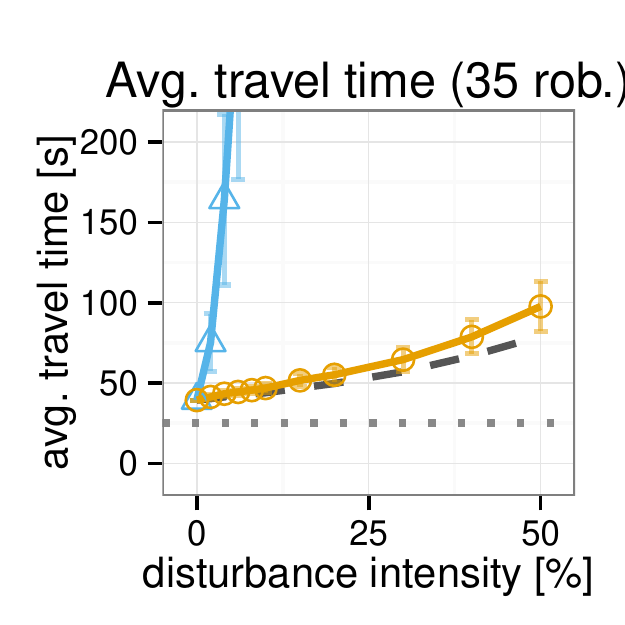} 
     & 
    \includegraphics[width=2.7cm]{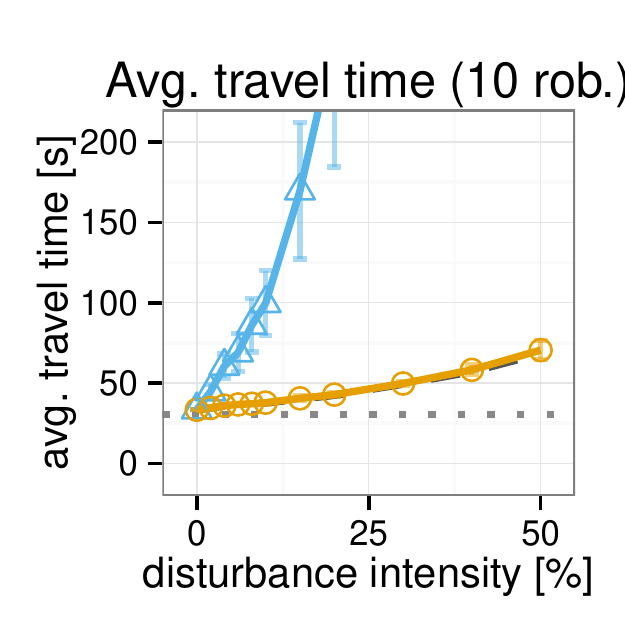}
    \includegraphics[width=2.7cm]{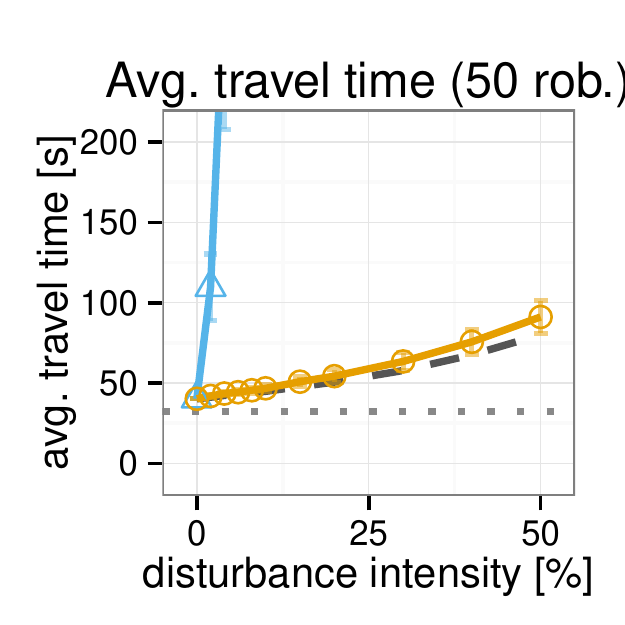}  
    \\  
    \end{tabular}
    \includegraphics[scale=0.5]{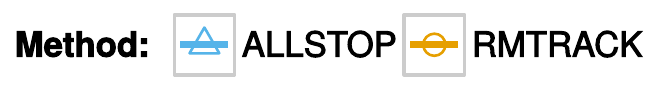} 

  \end{center}
\caption{Experimental comparison of ALL-STOP strategy with RMTRACK. Each datapoint is an average travel time of a single robot from its origin to its destination under the given disturbance intensity using one of the two evaluated control strategies. The dashed line represents the average lower bound on the travel time under the given disturbance intensity. The dotted line representes the average travel time from origin to destination assuming no disturbance and no need for coordination between robots. The bars represent standard deviation of the difference between the travel time under the evaluated algorithm and the lower bound travel time.} \label{fig:allstop-comparison}
\end{figure*}

\begin{figure*}[t]
\begin{center}
    \begin{tabular}{c c c}
    Empty hall & Office corridor & Warehouse \\
    \includegraphics[width=2.7cm]{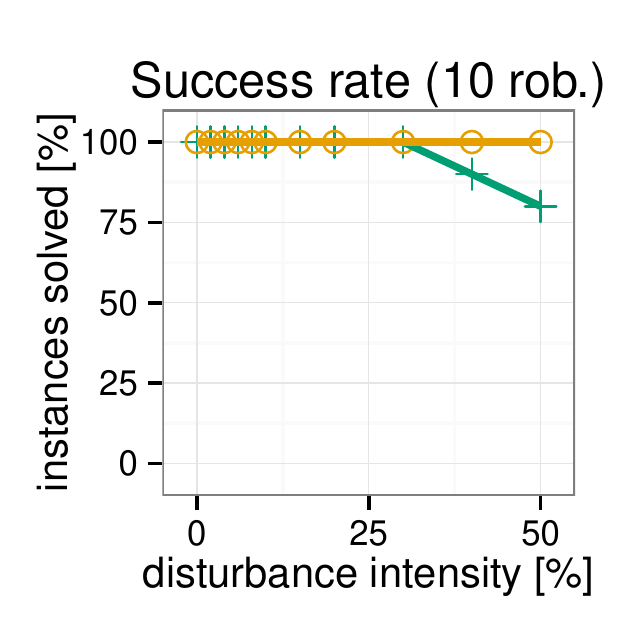}
    \includegraphics[width=2.7cm]{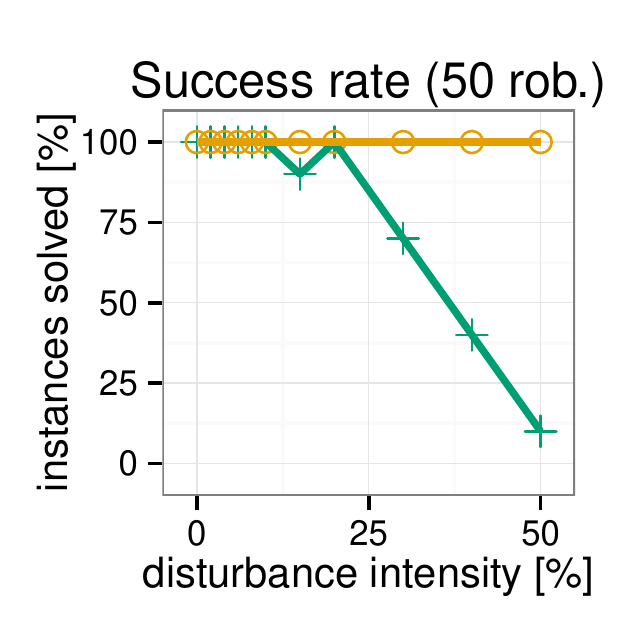}
     & 
    \includegraphics[width=2.7cm]{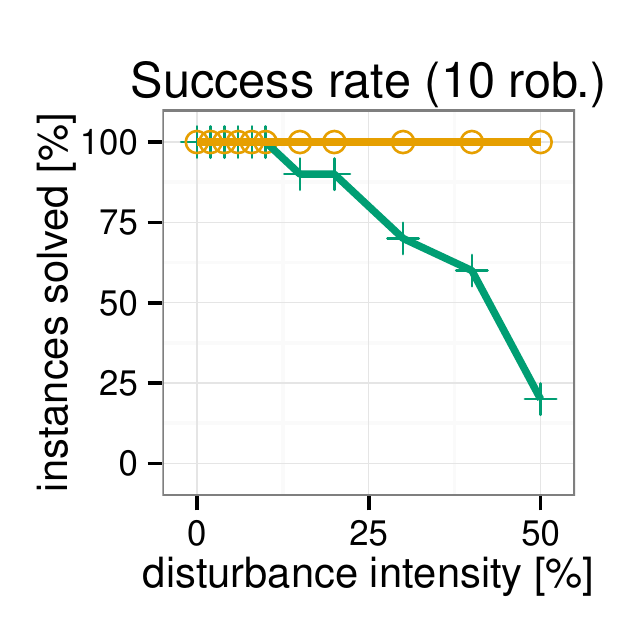}
    \includegraphics[width=2.7cm]{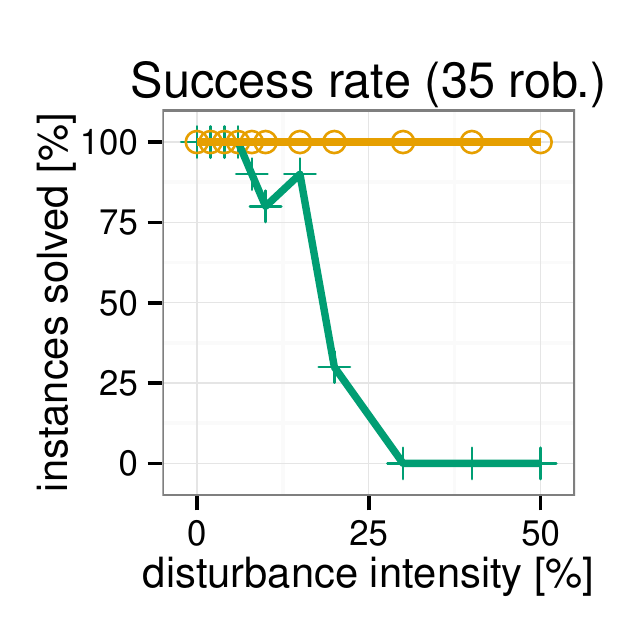}
     & 
    \includegraphics[width=2.7cm]{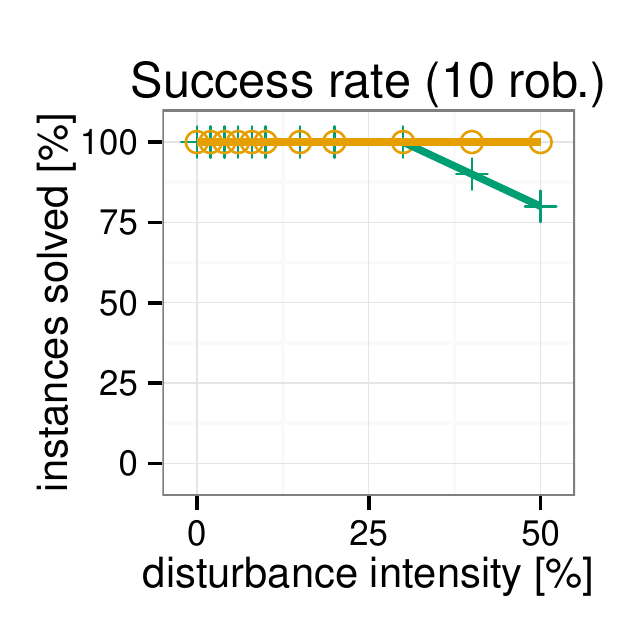}
    \includegraphics[width=2.7cm]{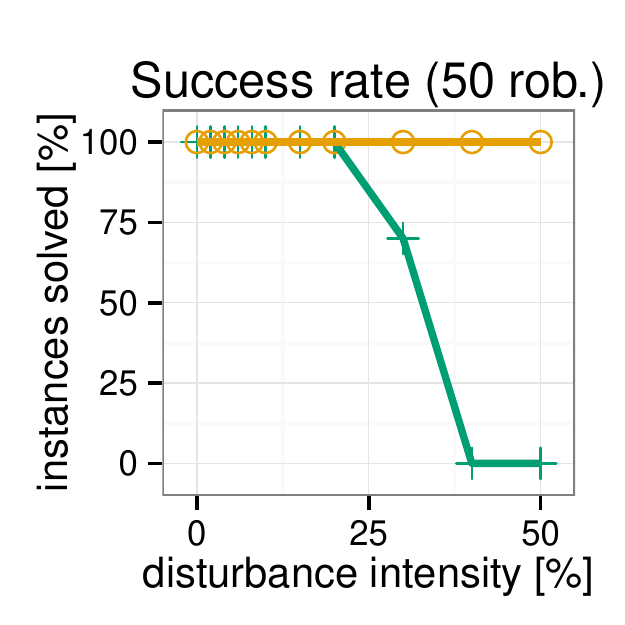}
    \\    
    \includegraphics[width=2.7cm]{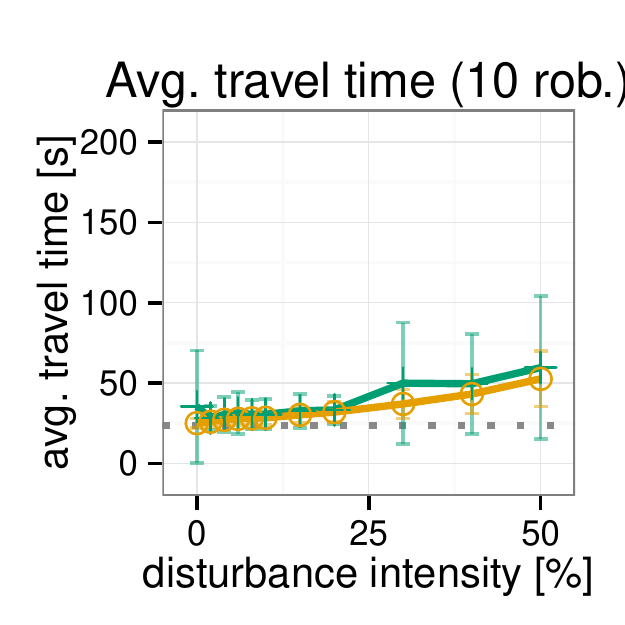}
    \includegraphics[width=2.7cm]{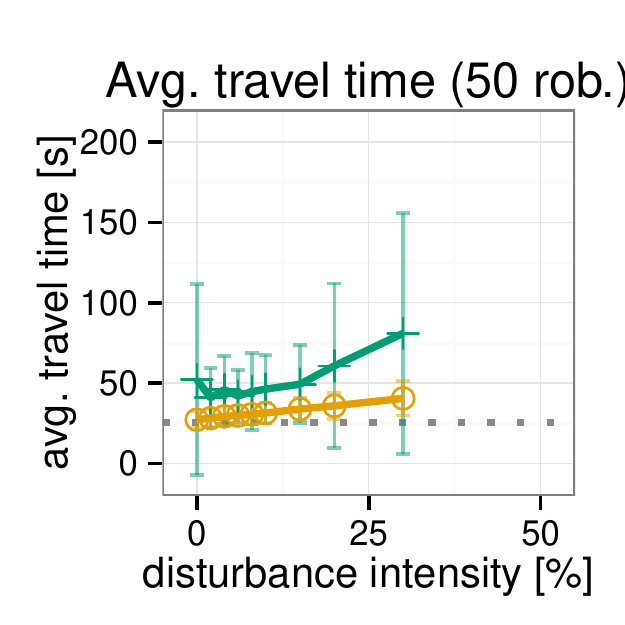} 
     & 
    \includegraphics[width=2.7cm]{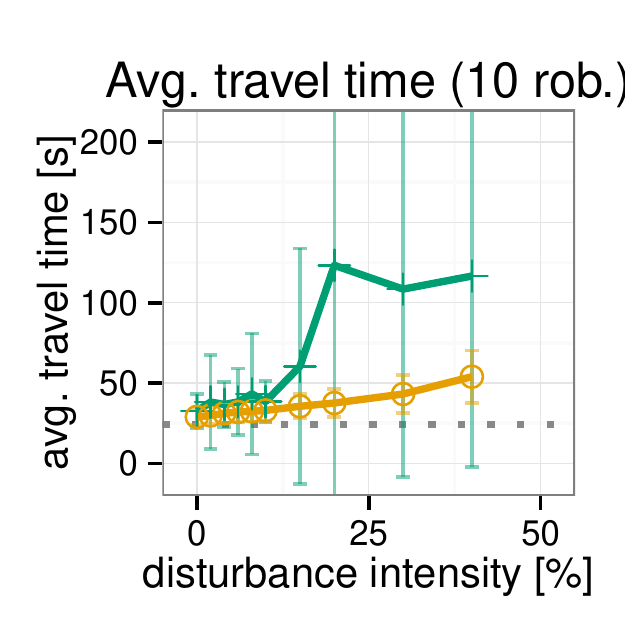} 
    \includegraphics[width=2.7cm]{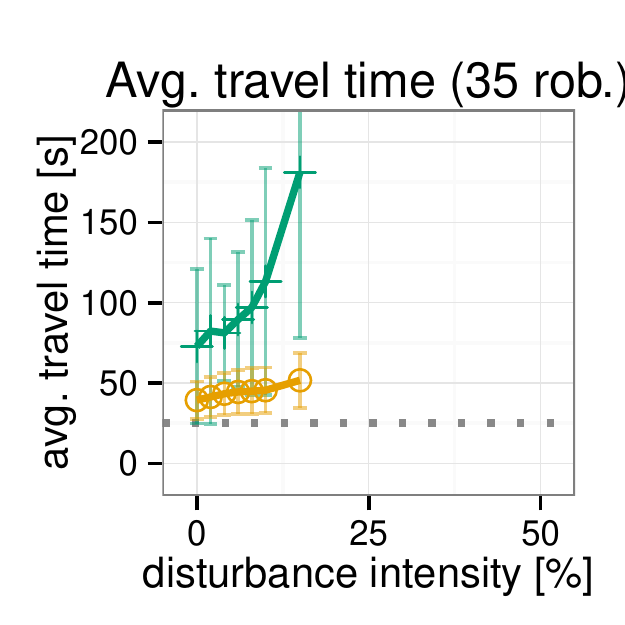} 
     & 
    \includegraphics[width=2.7cm]{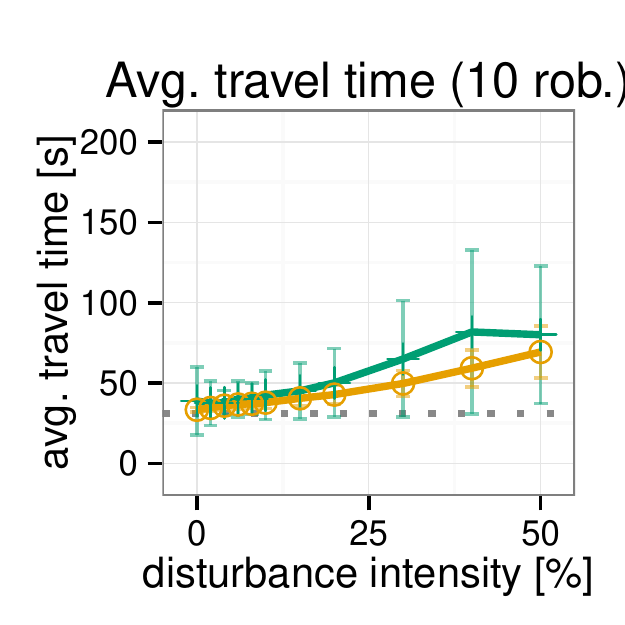}
    \includegraphics[width=2.7cm]{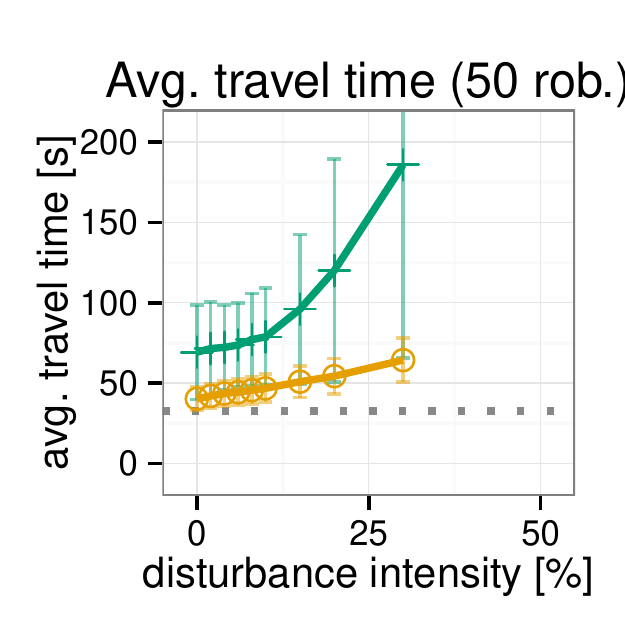} \\  
    \end{tabular}
    \includegraphics[scale=0.5]{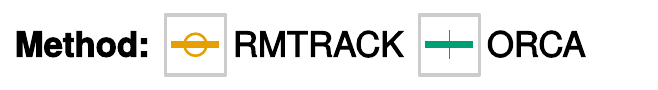} 
  \end{center}
\caption{Experimental comparison of ORCA and RMTRACK. The average travel time is computed if there is at least five instances for given number of robots and disturbance intensity successfuly solved by both evaluated algorithms. The dotted line representes the average travel time from origin to destination assuming that no disturbance and no need for coordination with other robots. The bars represent standard deviation of the difference between the travel time under the evaluated algorithm and the average travel time from origin to destination ignoring collisions with other robots and disiturbance.} \label{fig:orca-comparison}
\end{figure*}

In this section we will discuss the results of experimental comparison of RMTRACK approach against the baseline liveness-preserving method ALLSTOP and a reactive method ORCA using multi-robot simulation.

\subsubsection*{Experiment Setup}
The comparison was performed in three environments: Empty hall, Office corridor and Warehouse as shown in Figure~\ref{fig:environments}. 
A single problem instance in one of the environments consists of $n$ robots attempting to move from randomly generated origins to randomly generated destinations. 
We first find collision-free multi-robot trajectories from the origins to the destinations and then let each robot follow the given trajectory while randomly disturbing its advancement with specified intensity. More precisely, every second we decide with probablity corresponding to the disturbance intensity whether during the following second the robot will be prevented from moving.

To ensure that the initial coordinated trajectories can be found in a reliable and tractable fashion, the test environments satisfy so-called well-formed infrastructure property\footnote{In a well-formed infrastructure a start and destination of each robot is constrained to lie at a position where it does not completely prevent other robots from reaching their goals - most man-made infrastructures, e.g. a national road network system,  satisfy the property.}~\cite{cap_2015_b}. This allowed us to use revised prioritized planning approach~\cite{cap_2015_b} to efficiently find the initial trajectories for the robots to follow.   

In Empty hall and Warehouse environment we generated 10 instances with 10 robots and 10 instances with 50 robots; In Office corridor environment we generated 10 instances with 10 robots and 10 instances with 35 robots. Note that a single instance represents a specific assignment of origins and destinations to the robots.

An illustrative video, the source codes of all algorithms and benchmark instances are available at http://agents.fel.cvut.cz/~cap/rmtrack/.

\subsection*{Comparision of RMTRACK and ALLSTOP}
First, we compare two liveness-preserving control laws to handle disturbances: 1) ALL-STOP: the baseline law that makes entire multi-robot team stop whenever a single robot is disturbed and 2) RMTRACK: the law proposed in Section~\ref{sec:control-law}.
For each instance and disturbance intensity ranging from 0\,\% to 50\,\%, we run both algorithms and measured the time it took for each robot to reach its destination. 

In order to isolate the effect of each control law on the travel time from the effect of disturbances and the effect of the quality of the initial plan, we compute a lower-bound on the travel time of each robot assuming fixed disturbance and fixed initial plan. 
Such a lower bound is obtained by simulating the robot such that the inter-robot collisions are ignored and thus all robots always command to proceed at maximum advancement rate along their initial trajectory. Then the average advancement rate of the robot and consequently the travel time is affected solely by disturbances.
In fact, for uniformly distributed random disturbance with equal intensity $q$ for all robots, this corresponds to robots advancing on expectation at $1-q$ fraction of the original advancement rate $1$. 
Thus, the lower bound on expected travel time under disturbance intensity $q$ can be also computed as 
$ E(t_f)/(1-q)$, where $E(t_f)$ denotes expected travel time in the absence of disturbance.

It is easy to see that this lower bound represents the best possible travel time that can be achieved by RMTRACK, for instance, the travel time when the paths of the robots do not overlap. On the other hand, it is not difficult to construct a combination of problem instance and disturbances for which the behavior of RMTRACK degenerates to that of ALLSTOP. 
Curiously, since ALLSTOP proceeds only when none of the robots is disturbed, which for uniformly distributed disturbance with intensity $q$ at each robot happens with probability $(1-q)^n$, the expected travel time for ALLSTOP strategy can be consequently computed as $E(t_f)/(1-q)^n$, where $E(t_f)$ is again the expected travel time without disturbance and $n$ is the number of robots in the system. 

Consequently, we expect the average traveltime under RMTRACK strategy to be bounded from below by the lower-bound travel time and by the ALLSTOP travel time from above. 
The actual travel time under RMTRACK will then depend on the "interdependency" of initial trajectories and the level of disturbance. Given these two bounds, an interesting question is how will RMTRACK strategy perform in characteristic real-world environments. 

The results of performance comparison of RMTRACK with respect to ALLSTOP and the lower bound travel time for the three test environments are shown in Figure~\ref{fig:allstop-comparison}. We can see that consistently over all test environments and for different numbers of robots, the baseline strategy ALLSTOP quickly becomes impractical when the disturbance intensity is high. In contrast, the average travel time under RMTRACK remains reasonable even for high intensities of disturbance. Further, it is encouraging that for all three environments we tested on, the average travel time under RMTRACK remains close to the lower-bound travel time.

\subsection*{Comparision of RMTRACK and ORCA}

Next, we compared RMTRACK strategy with a reactive collision-avoidance technique ORCA~\cite{vanBerg2011_ORCA}, which is a characteristic representative of a family of popular collision avoidance algorithms based on the reciprocal velocity obstacle paradigm. 
Given the current velocities of all robots in the neighborhood and the desired velocity vector, it attempts to compute the closest velocity vector to the current desired velocity that does not lead to future collision with other robots, assuming that they will continue moving at their current velocity. 
In our implementation, the desired velocity at each time instance follows the shortest path to destination. 
For each instance we run ORCA and RMTRACK techniques for different disturbance intensities ranging from 0\,\% to 50\,\%.
During the experiment, we often witnessed ORCA leading robots to dead-lock situations during which the robots either moved at extremely slow velocities or even stopped completely. 
Therefore, if the robots failed to reach their destination within 10 minutes\footnote{average travel time between origin and destination ignoring collisions and without disturbance is around 25 second}, we considered the run as failed.

Figure~\ref{fig:orca-comparison} summarizes the results of the comparison. We can see that the success rate of ORCA deteriorates with increasing disturbance intensity. 
This is perhaps surprising since reactive methods are believed to be particularly well suited for unpredictable environments. 
Among of the reasons behind this phenomena seems to be that the reciprocal reactive algorithms rely on all robots executing the same algorithm and consequently on "splitting" the collision avoidance effort. 
This assumption is however violated if one of the robots is disturbed and does not execute the velocity command that the algorithm computed.
The plots in the bottom row show comparison of performance of RMTRACK and ORCA.  We can see that even when ORCA solves a given instance, the expected travel time for a robot is on expectation longer, especially so in cluttered environments and for high disturbance intensities.

\section{Conclusion}
\label{sec:conclusion}

The ability to guarantee safe and dead-lock free motion coordination for autonomous multi-robot systems in unpredictable environments is a standing challenge. Existing multi-robot trajectory planning techniques can provide guarantees on safety and liveness, but only if the computed trajectories are executed by all robots precisely. In this work, we have shown how to maintain guaranteed motion coordination under delay-inducing disturbances, i.e. in systems where the robots are able to follow the path precisely, but they can be temporarily stopped or delayed while executing the planned trajectory. These assumptions apply for example to autonomous intra-logistics systems in shared human-robot environments in which the robot is able to follow the preplanned path, but it is required to yield to all humans crossing its path. We have provided a formal proof that the control rule avoids inter-robot collisions and, more importantly, it preserves liveness property, which means that the robots are guaranteed to reach their goal positions without engaging in deadlocks. The method has been shown to be both more reliable and more efficient than the existing techniques used currently for coordination in multi-robot systems.

\bibliographystyle{plain}
\bibliography{references}

\end{document}